\crefname{equation}{equation}{equations}
\declaretheorem[name=Claim]{claim}
\title{Stochastic Gradient Descent\\for Gaussian Processes Done Right}
\author{%
\textbf{Jihao Andreas Lin}\thanks{Equal contribution.\\Code available at: \url{https://github.com/cambridge-mlg/sgd-gp}.}\textsuperscript{\ensuremath{\phantom{*}1,2}} \, \textbf{Shreyas Padhy}\textsuperscript{\ensuremath{*1}} \, \textbf{Javier Antorán}\textsuperscript{\ensuremath{*1}} \, \textbf{Austin Tripp}\textsuperscript{\ensuremath{1}}
\\
\textbf{Alexander Terenin}\textsuperscript{\ensuremath{1,3}} \, \textbf{Csaba Szepesvári}\textsuperscript{\ensuremath{4}} \, \textbf{José Miguel Hernández-Lobato}\textsuperscript{\ensuremath{1}} \, \textbf{David Janz}\textsuperscript{\ensuremath{4}}
\\
\textsuperscript{\ensuremath{1}}University of Cambridge\quad\textsuperscript{\ensuremath{2}}Max Planck Institute for Intelligent Systems\\\textsuperscript{\ensuremath{3}}Cornell University\quad\textsuperscript{\ensuremath{4}}University of Alberta}
\begin{document}

\maketitle

\begin{abstract}
As is well known, both sampling from the posterior and computing the mean of the posterior in Gaussian process regression reduces to solving a large linear system of equations. We study the use of stochastic gradient descent for solving this linear system, and show that when \emph{done right}---by which we mean using specific insights from the optimisation and kernel communities---stochastic gradient descent is highly effective. To that end, we introduce a particularly simple \emph{stochastic dual descent} algorithm, explain its design in an intuitive manner and illustrate the design choices through a series of ablation studies. Further experiments demonstrate that our new method is highly competitive. In particular, our evaluations on the UCI regression tasks and on Bayesian optimisation set our approach apart from preconditioned conjugate gradients and variational Gaussian process approximations. Moreover, our method places Gaussian process regression on par with state-of-the-art graph neural networks for molecular binding affinity prediction.
\end{abstract}

\section{Introduction}
Gaussian process regression is the standard modelling choice in Bayesian optimisation and other applications where uncertainty-aware decision-making is required to gather data efficiently.
The main limitation of Gaussian process models is that their fitting requires solving a large linear system of equations which, using direct methods, has a cost cubic in the number of observations.

Standard approaches to reducing the cost of fitting Gaussian process models either apply approximations to reduce the complexity of the linear system, such as the Nystr{\"o}m and related variational approximations \cite{williams2000using,Titsias2009variational,Hensman2013big}, or use carefully engineered iterative solvers, such as preconditioned conjugate gradients \cite{wang2019exact}, or employ a combination of both \cite{rudi2017falkon}. An alternative approach that we focus on in this work is the use of stochastic gradient descent to minimise an associated quadratic objective. 

Multiple works have previously investigated the use of SGD with Gaussian process and related kernel models \cite{lin2023sampling,dai2014scalable,kivinen2004online}, with \textcite{lin2023sampling} pointing out, in particular, that SGD may be competitive with conjugate gradients (CG) when the compute budget is limited, in terms of both the mean and uncertainty predictions that it produces. In this work, we go a step further, and demonstrate that when done right, SGD can outperform CG.

To that end, we propose a simple SGD-based algorithm, which we call \emph{stochastic dual descent} (SDD). Our algorithm is an adaptation of ideas from the stochastic dual coordinate ascent (SDCA) algorithm of \textcite{shalev2013stochastic} to the large-scale deep-learning-type gradient descent setting, combined with insights on stochastic approximation from \textcite{dieuleveut2017harder} and \textcite{varre2021last}. We provide the following evidence supporting the strength of SDD:
\begin{enumerate}
    \item On standard UCI regression benchmarks with up to 2 million observations, stochastic dual descent either matches or improves upon the performance of conjugate gradients.
    \item On the large-scale Bayesian optimisation task considered by \textcite{lin2023sampling}, stochastic dual descent is shown to be superior to their stochastic gradient descent method and other baselines, both against the number of iterations and against wall-clock time.
    \item On a molecular binding affinity prediction task, the performance of Gaussian process regression with stochastic dual descent matches that of state-of-the-art graph neural networks.
\end{enumerate}
In short, we show that a simple-but-well-designed stochastic gradient method for Gaussian processes can be very competitive with other approaches for Gaussian processes, and may make Gaussian processes competitive with graph neural networks on tasks on which the latter are state-of-the-art.

\section{Gaussian Process Regression}
We consider Gaussian process regression
over a domain $\cX \subset \R^d$, assuming a Gaussian process prior induced by  
a continuous, bounded, positive-definite kernel 
$k: \cX \times \cX \to \R$. Our goal is to, given some observations, either (i)~sample from the Gaussian process posterior, or (ii)~compute its mean.

To formalise these goals, let some observed inputs $x_1, \dotsc, x_n \in \cX$ be collected in a matrix 
$X \in \R^{n\times d}$ and let $y\in \R^n$ denote the corresponding observed real-valued targets. We say that a random function $f_n \colon \cX \to \R$ is a sample from the \emph{posterior Gaussian process} associated with the kernel $k$, data set $(X,y)$, and likelihood variance parameter $\lambda>0$, 
if the distribution of any finite set of its marginals is jointly multivariate Gaussian, and the mean and covariance between the marginals of $f_n$ at any $a$ and $a'$ in $\cX$ are given by
\begin{align*}
m_n(a) = \E[f_n(a)] &= k(a,X)(K+\lambda I)^{-1} y \quad \text{and}
\\
\operatorname{Cov}(f_n(a),f_n(a')) &= k(a,a') - k(a,X)(K+\lambda I)^{-1}k(X, a')
\end{align*}
respectively, where $K\in \R^{n\times n}$ is the $n\times n$ matrix whose $(i,j)$ entry is given by $k(x_i,x_j)$, 
$I$ is the $n\times n$ identity matrix,
$k(a,X)$ is a row vector in $\R^n$ with entries $k(a, x_1), \dotsc, k(a, x_n)$ and 
$k(X,a')$ is a column vector defined likewise \cite[per the notation of][]{rasmussen2006gaussian}.

Throughout, for two vectors $a,b \in \R^p$, with $p \in \N^+$, we write $a\tran b$ for their usual inner product and $\|a\|$ for the 2-norm of $a$. For a symmetric positive semidefinite matrix $M \in \R^{p\times p}$, we write $\|a\|_M$ for the $M$-weighted 2-norm of $a$, given by $\|a\|_M^2 = a\tran M a$. For a symmetric matrix $S$, we write $\|S\|$ for the operator norm of $S$, and $\lambda_i(S)$ for the $i$th largest eigenvalue of $S$, such that $\lambda_1(S) = \|S\|$.

\section{Stochastic Dual Descent for Regression and Sampling}

We show our proposed algorithm, \emph{stochastic dual descent}, in \cref{alg:compute-mean}. The algorithm can be used for two purposes: regression, where the goal is to produce a good approximation to the posterior mean function $m_n$, and sampling, that is, drawing a sample from the Gaussian process posterior. 

The algorithm takes as input a kernel matrix $K$, the entries of which can be computed `on demand' using the kernel function $k$ and the observed inputs $x_1, \dotsc, x_n$ as needed, a likelihood variance $\lambda>0$, and a vector $b\in \R^n$.
It produces a vector of coefficients $\overline \alpha_T \in \R^n$, 
which approximates
\begin{equation}
    \label{eq:alpha-star}
\alpha^\star(b) = (K+\lambda I)^{-1} b\,.
\end{equation}
To translate this into our goals of mean estimation and sampling, given a vector $\alpha\in \R^n$, let
\begin{equation*}
    h_\alpha (\cdot) = \sum_{i=1}^n \alpha_{i} k(x_i,\cdot)\,.
\end{equation*}
Then, $h_{\alpha^\star(y)}$ gives the mean function $m_n$, and thus running the algorithm with the targets $y$ as the vector $b$ can be used to estimate the mean function. Moreover, given a sample $f_0$ from the Gaussian process prior associated with $k$, and noise $\zeta \sim \cN(0, \lambda I)$, we have that
\begin{equation*}
    f_0+ h_{\alpha^\star( y - (f_0(X) + \zeta) )}
\end{equation*}  
is a sample from the Gaussian process posterior \cite{wilson2020efficiently,wilson2021pathwise}. 
In practice, one might approximate $f_0$ using random features \cite[as done in][]{wilson2020efficiently,wilson2021pathwise,lin2023sampling}.

\begin{algorithm}[tb]
    \caption{\emph{Stochastic dual descent} for approximating $\alpha^\star(b) = (K+\lambda I)^{-1}b$}
    \label{alg:compute-mean}
    \begin{algorithmic}[1]
    \Require Kernel matrix $K$ with rows $K_1,\dots,K_n\in \R^n$, targets $b \in \R^n$, 
    likelihood variance $\lambda > 0$, \\
    number of steps $T \in \N^+$, 
    batch size $B \in \{1, \dotsc, n\}$, 
    step size $\beta > 0$, \\
    momentum parameter $\rho \in [0,1)$, 
    averaging parameter $r \in (0,1]$
    \State $v_0 = 0$; $\alpha_0 = 0$; $\overline \alpha_0 = 0$ \Comment{all in $\R^n$}
    \For{$t \in \{1, \dotsc, T\}$}
        \State Sample $\mathcal{I}_t = (i^t_1, \dotsc, i^t_B) \sim \unif{\{1,\dotsc, n\}}$ independently \Comment{random coordinates}
        \State $g_t = \frac{n}{B} \sum_{i \in \mathcal{I}_t} (\smash{( K_i + \lambda e_i)\tran (\alpha_{t-1} + \rho v_{t-1})} - b_i)e_i$ \Comment{gradient estimate}
        \State $v_t = \rho v_{t-1} - \beta g_t$ \Comment{velocity update}
        \State $\alpha_t = \alpha_{t-1} + v_t$ \Comment{parameter update}
        \State $\overline\alpha_{t} = r\alpha_t + (1-r)\overline\alpha_{t-1}$ \Comment{iterate averaging}
    \EndFor
    \State \Return $\overline\alpha_T$
\end{algorithmic}

\end{algorithm}

The SDD algorithm is dinstinguished from variants of SGD used in the context of GP regression by the following features: (i) SGD uses a dual objective in place of the usual kernel ridge regression objective, (ii) it uses stochastic approximation entirely via random subsampling of the data instead of random features,
(iii) it uses Nesterov's momentum, and (iv) it uses geometric, rather than arithmetic, iterate averaging. 
In the following subsections, we examine and justify each of the choices behind the algorithm design, and illustrate these on the UCI data set \textsc{pol} \cite{dua2019UCI}, chosen for its small size, which helps us to compare against less effective alternatives. 

\subsection{Gradient Descent: Primal versus Dual Objectives}\label{subsec:primal_vs_dual}

For any $b \in \R^n$, computing the vector $\alpha^\star(b)$ of \cref{eq:alpha-star} using Cholesky factorisation takes on the order of $n^3$ operations. We now look at how $\alpha^\star(b)$ may be approximated using gradient descent.

As is well known, the vector $\alpha^\star(b)$ is the minimiser of the kernel ridge regression objective,
\begin{equation*}
    \label{eq:primal_objective}
L(\alpha) = \frac{1}{2}\norm{b - K \alpha}^2 + \frac{\lambda}{2} \norm{\alpha}^2_K 
\end{equation*}
over $\alpha \in \R^n$ \cite{smola1998learning}. We will refer to $L$ as the \emph{primal} objective. Consider using gradient descent to minimise $L(\alpha)$. 
This entails constructing a sequence $(\alpha_t)_t$ of elements in $\R^n$, which we initialise at the standard but otherwise arbitrary choice $\alpha_0 = 0$, and setting
\begin{equation*}
    \alpha_{t+1} = \alpha_t - \beta \nabla L(\alpha_t)\,,
\end{equation*}
where $\beta > 0$ is a step-size and $\nabla L$ is the gradient function of $L$. Recall that the speed at which $\alpha_t$ approaches $\alpha^\star(b)$ is determined by the condition number of the Hessian: the larger the condition number, the slower the convergence speed \cite{boyd2004convex}.
The intuitive reason for this correspondence is that, to guarantee convergence, the step-size needs to scale inversely with the largest eigenvalue of the Hessian, while progress in the direction of an eigenvector underlying an eigenvalue is governed by the step-size multiplied with the corresponding eigenvalue. With that in mind, the \emph{primal} gradient and Hessian are
\begin{equation}\label{eq:primal_gradient}
    \nabla L(\alpha) = K (\lambda \alpha - b + K\alpha)
    \spaced{and}
    \nabla^2 L(\alpha) = K(K+\lambda I)
\end{equation}
respectively, and therefore the relevant eigenvalues are bounded as
\begin{equation*}
    0 \leq \lambda_n(K(K+\lambda I )) \leq \lambda_1(K(K+\lambda I)) \leq \kappa n(\kappa n+\lambda)\,,
\end{equation*}
where $\kappa = \sup_{x\in \cX} k(x,x)$ is finite by assumption. These bounds only allow for a step-size $\beta$ on the order of $(\kappa n(\kappa n+\lambda))^{-1}$. Moreover, since the minimum eigenvalue is not bounded away from zero, we do not have a~priori guarantees for the performance of gradient descent using $L$. 

\begin{figure}[t]
\vspace{-0.2cm}
    \centering
    \includegraphics[width=5.5in]{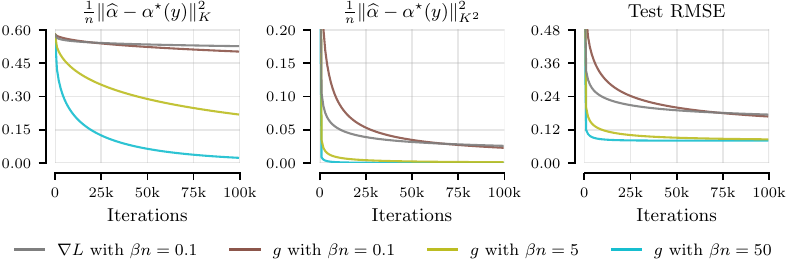}
    \vspace{-0.6cm}
    \caption{Comparison of full-batch primal and dual gradient descent on \textsc{pol} with varying step-sizes. Primal gradient descent becomes unstable and diverges for $\beta n$ greater than $0.1$. Dual gradient descent is stable with larger step-sizes, allowing for markedly faster convergence than the primal.
    For $\beta n =0.1$, the dual method makes more progress in the $K$-norm, whereas the primal in the $K^2$-norm.}
    \label{fig:toy-gradient-primal-vs-dual}
\end{figure}

Consider, instead, minimising the \emph{dual objective}
\begin{equation}\label{eq:dual_objective}
    L^*(\alpha) = \frac12 \norm{\alpha}_{K+\lambda I}^2 - \alpha\tran b.
\end{equation}
The dual $L^*$ has the same unique minimiser as $L$, namely $\alpha^\star(b)$, and the two are, up to rescaling, a strong dual pair (in a sense made precise in \cref{appendix:duality}). The \emph{dual gradient} and Hessian are given by
\begin{equation}
    \label{eq:dual_gradient}
    g(\alpha) := \nabla L^*(\alpha) = \lambda \alpha - b + K\alpha \spaced{and} \nabla^2L^*(\alpha) = K + \lambda I\,.
\end{equation}
Examining the eigenvalues of the above Hessian, we see that gradient descent on $L^*$ (dual gradient descent) may use a step-size of up to an order of $\kappa n$ higher than that on $L$ (primal gradient descent), leading to faster convergence. Moreover, since the condition number of the dual satisfies $\mathrm{cond}(K+\lambda I) \leq1+ \kappa n/\lambda $ and $\lambda$ is positive, we can provide an a~priori bound on the number of iterations required for dual gradient descent to convergence to any fixed error level for a length $n$ sequence of observations.

In \cref{fig:toy-gradient-primal-vs-dual}, we plot the results of an experimental comparison of primal and dual gradient descent on the UCI \textsc{pol} regression task. 
There, gradient descent with the primal objective is stable up to $\beta n=0.1$ but diverges for larger step-sizes. In contrast, gradient descent with the dual objective is stable with a step-size as much as $500\times$ higher, converges faster and reaches a better solution; see also \cref{appendix:extra-experiments} for more detailed plots and recommendations on setting step-sizes. We show this on three evaluation metrics: distance to $\alpha^\star(y)$ measured in $\|\cdot\|^2_K$, the $K$-norm (squared) and in $\|\cdot\|^2_{K^2}$, the $K^2$-norm (squared), and test set root-mean-square error (RMSE). To understand the difference between the two norms, note that the $K$-norm error bounds the error of approximating $h_{\alpha^\star(b)}$ uniformly. Indeed, as shown in \cref{appendix:pointwise-error}, we have the~bound
\begin{equation}
    \label{eq:pointwise-error}
    \| h_\alpha - h_{\alpha^\star(b)}\|_\infty \leq \sqrt{\kappa} \| \alpha - \alpha^\star(b)\|_K\,,
\end{equation}
where recall that $\kappa = \sup_{x\in \cX} k(x,x)$. Uniform norm guarantees of this type are crucial for sequential decision-making tasks, such as Bayesian optimisation, where test input locations may be arbitrary. The $K^2$-norm metric, on the other hand, reflects the training error. 

Examining the two gradients, its immediate that the primal gradient optimises for the $K^2$-norm, while the dual for the $K$-norm. And indeed, we see in \cref{fig:toy-gradient-primal-vs-dual} that when both methods use $\beta n=0.1$, up to $70$k iterations, the dual method is better on the $K$-norm metric and the primal on $K^2$. Later, the dual gradient method performs better on all metrics. This, too, is to be expected, as the minimum eigenvalue of the Hessian of the dual loss is higher than that of the primal loss.

\subsection{Randomised Gradients: Random Features versus Random Coordinates}\label{subsec:stochastic_approximation}

\begin{figure}[t]
\vspace{-0.2cm}
    \centering
    \includegraphics[width=5.5in]{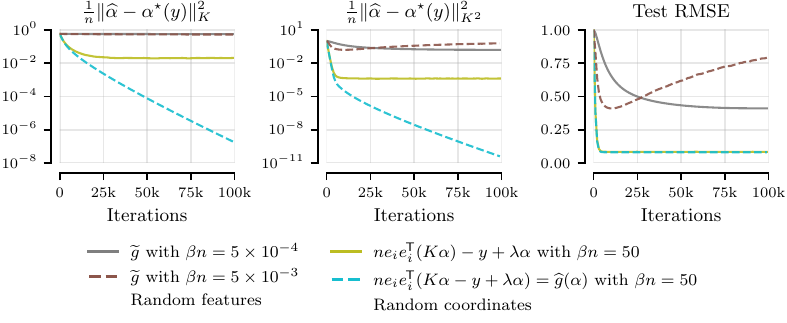}
    \vspace{-0.55cm}
    \caption{A comparison of dual stochastic gradient descent on the \textsc{pol} data set with either random Fourier features or random coordinates, using batch size $B=512$, momentum $\rho=0.9$ and averaging parameter $r=0.001$ (see \cref{subsec:acceleration_and_averaging} for explanation of latter two). Random features converge with $\beta n=5\times 10^{-4}$ but perform poorly, and diverge with a higher step-size. Random coordinates are stable with $\beta n = 50$ and show much stronger performance on all metrics. We include a version of random coordinates where only the $K\alpha$ term is subsampled: this breaks the multiplicative noise property, and results in an estimate which is worse on both the $K$-norm and the $K^2$-norm metric. }
    \label{fig:toy-features-vs-coordinates}
\end{figure}

To compute either the primal or the dual gradients, presented in \cref{eq:primal_gradient,eq:dual_gradient}, we need to compute matrix-vector products of the form $K \alpha$, which requires order $n^2$ computations. We now introduce and contrast two types of stochastic approximation for our dual gradient $g(\alpha)$ that reduce the cost to order $n$ per-step, and carefully examine the noise they introduce into the gradient. 

For the sake of exposition (and exposition only, this is not an assumption of our algorithm), assume that we are in an $m$-dimensional (finite) linear model setting, such that $K$ is of the form $\sum_{j=1}^m \Zj \Zj\tran$, where $\Zj\in \R^n$ collects the values of the $j$th feature of the observations $x_1,\dots,x_n$. Then, for $j \sim \unif{\{1, \dotsc, m\}}$, we have that $\E[ m \Zj \Zj\tran] = K$, and therefore 
\begin{equation*}
    \widetilde g(\alpha) = \lambda\alpha - b + m\Zj \Zj\tran \alpha
\end{equation*}
is an unbiased estimate of $g(\alpha)$; we call $\widetilde g$ the random feature estimate of $g$. The alternative we consider is random coordinates, where we take $i \sim \unif{\{1, \dotsc, n\}}$, and use   
\begin{equation*}
    \widehat g(\alpha) =  n e_i e_i\tran g(\alpha) = n e_i (\lambda \alpha_i - b_i + K_i \alpha)\,. 
\end{equation*}
Observe that $\widehat g(\alpha)$ zeros all but the $i$th (of $n$) coordinate of $g(\alpha)$, and then scales the result by $n$; since $\E[n e_ie_i\tran] = I$, $\widehat g(\alpha)$ is also an unbiased estimate of $g(\alpha)$. Note that the cost of calculating either $\widetilde g(\alpha)$ or $\widehat g(\alpha)$ is linear in $n$, and therefore both achieve our goal of reduced computation time. 

However, while $\widetilde g$ and $\widehat g$ may appear similarly appealing, the nature of the noise introduced by these, and thus their qualities, are quite different. In particular, one can show that
\begin{align*}
\norm{ \widehat g(\alpha) - g(\alpha) } \le \norm{ (M-I) (K+\lambda I) } \norm{\alpha - \alpha^\star(b)}\,,
\end{align*}
where $M = n e_i e_i$ is the random coordinate approximation to the identity matrix.
As such, the noise introduced by $\widehat g(\alpha)$ is proportional to the distance between the current iterate $\alpha$, and the target $\alpha^\star$, making it a so-called \emph{multiplicative noise} gradient estimate \cite{dieuleveut2017harder}.  Intuitively, multiplicative noise estimates automatically reduce the amount of noise injected as the iterates get closer to their target. On the other hand, for $\widetilde g(\alpha)$, letting $\widetilde K = m \Zj \Zj\tran$, we have that
\begin{align*}
\norm{ \widetilde g(\alpha) - g(\alpha) } = \norm{(\widetilde K-K) \alpha}\,,
\end{align*}
and thus the error in $\widetilde g(\alpha)$ is \emph{not reduced} as $\alpha$ approaches $\alpha^\star(b)$. The behaviour of $\widetilde g$ is that of an \emph{additive noise} gradient estimate. Algorithms using multiplicative noise estimates, when convergent, often yield better performance \cite[see][]{varre2021last}. 

Another consideration is that when $m$, the number of features, is larger than $n$, individual features $\Zj$ may also be less informative than individual gradient coordinates $\smash{e_i e_i\tran g(\alpha)}$. Thus, picking features uniformly at random, as in $\widetilde g$, may yield a poorer estimate.
While this can be addressed by introducing an importance sampling distribution for the features $\Zj$ \cite[as per][]{li2019towards}, doing so adds implementation complexity, and could be likewise done to improve the random coordinate estimate.

In \cref{fig:toy-features-vs-coordinates}, we compare variants of stochastic dual descent with either random (Fourier) features or random coordinates.
We see that random features, which produce high-variance additive noise, can only be used with very small step-sizes and have poor asymptotic performance. 
We test two versions of random coordinates: $\widehat g(\alpha)$, where, as presented, we subsample the whole gradient, and an alternative, $n e_i e_i\tran(K\alpha) - y - \lambda \alpha$, where only the $K\alpha$ term is subsampled. While both are stable with much higher step-sizes than random features, the latter has worse asymptotic performance. This is a kind of \emph{Rao-Blackwellisation trap:} introducing the known value of $-y+\lambda\alpha$ in place of its estimate $ne_ie_i\tran (-y + \lambda \alpha)$ destroys the multiplicative property of the noise, making things worse, not better.

\cref{alg:compute-mean} combines a number of samples into a minibatched estimate to further reduce variance. We discuss the behaviour of different stochastic gradient estimates under minibatching in \cref{appendix:extra-experiments}.

There are many other possibilities for constructing randomised gradient estimates. 
For example, \textcite{dai2014scalable} applied the random coordinate method on top of the random feature method, in an effort to further reduce the order $n$ cost per iteration of gradient computation. However, based on the above discussion, we generally recommend estimates that produce multiplicative noise, and our randomised coordinate gradient estimates as the simplest of such estimates.

\subsection{Nesterov's Momentum and Polyak-Ruppert Iterate Averaging}\label{subsec:acceleration_and_averaging}

Momentum, or acceleration, is a range of modifications to the usual gradient descent updates that aim to improve the rate of convergence, in particular with respect to its dependence on the curvature of the optimisation problem \cite{polyak1964some}. Many schemes for momentum exist, with AdaGrad \cite{duchi2011adaptive}, RMSProp \cite{tieleman2012lecture} and Adam \cite{kingma2014adam} particularly popular in the deep learning literature. These, however, are designed to adapt to changing curvatures. As our objective has a constant curvature, we recommend the use of the simpler Nesterov's momentum \cite{nesterov1983method,sutskever2013importance}, as supported by our empirical results in \cref{fig:opt_comparison}, with further results in \Cref{fig:toy-iterate-averaging}, showing that momentum is vital for the \textsc{pol} data set. The precise form of the updates used is shown in \Cref{alg:compute-mean}; we use a momentum of $\rho=0.9$ throughout.

\begin{figure}[tbp]
    \centering
    \includegraphics[width=5.5in]{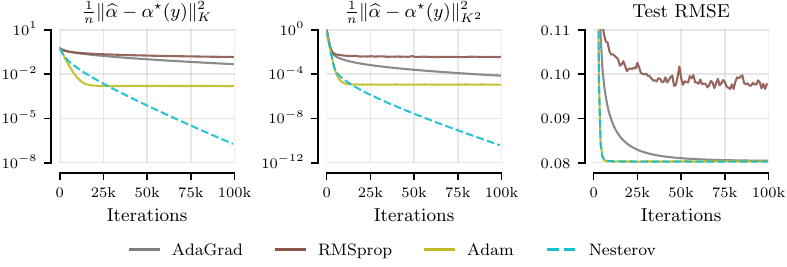}
    \vspace{-0.55cm}
    \caption{Comparison of dual stochastic gradient descent on the \textsc{pol} data set with different acceleration methods, using batch size $B = 512$, a geometric averaging parameter $r = 0.001$, and step-sizes tuned individually for each method (AdaGrad $\beta n = 10$; RMSprop \& Adam $\beta n = 0.05$; Nesterov's momentum $\beta n = 50$). Both Adam and Nesterov's momentum perform well on Test RMSE, but the latter performs better on the $K$ and $K^2$ norms.}
    \label{fig:opt_comparison}
\end{figure}

\begin{figure}[tbp]
        \centering
        \includegraphics[width=5.5in]{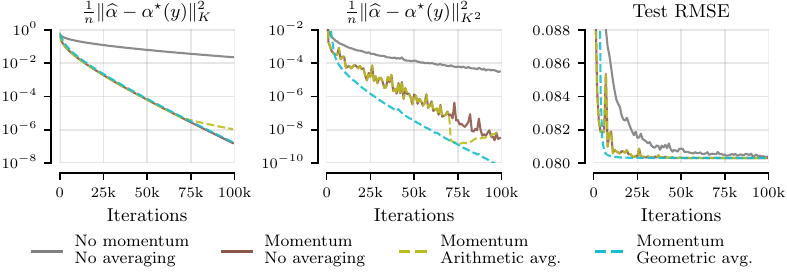}
        \vspace{-0.55cm}
        \caption{Comparison of optimisation strategies for random coordinate estimator of the dual objective on the \textsc{pol} data set, using momentum $\rho = 0.9$, averaging parameter $r = 0.001$, batch size $B=128$, and step-size $\beta n = 50$. Nesterov's momentum significantly improves convergence speed across all metrics. The dashed olive line, marked \emph{arithmetic averaging}, shows the regular iterate up until 70k steps, at which point averaging commences and the averaged iterate is shown. Arithmetic iterate averaging slows down convergence in $K$-norm once enabled. Geometric iterate averaging, on the other hand, outperforms arithmetic averaging and unaveraged iterates throughout optimisation.}
        \label{fig:toy-iterate-averaging}
    \end{figure}

Polyak-Ruppert averaging returns the average of the (tail of the) stochastic gradient iterates, rather than the final iterate, so as to reduce noise in the estimate \cite{polyak1990new,ruppert1988efficient,polyak1992acceleration}. While Polyak-Ruppert averaging is necessary with constant step-size and additive noise, it is not under multiplicative noise \cite{varre2021last}, and indeed can slow convergence. For our problem, we recommend using \emph{geometric averaging} instead, where we let $\overline\alpha_0 = \alpha_0$ and, at each step compute
\begin{equation*}
    \overline\alpha_t = r\alpha_t + (1-r) \overline\alpha_{t-1} \spaced{for an averaging parameter} r \in (0,1)\,,
\end{equation*}
and return as our estimate $\overline\alpha_T$. Geometric averaging is an anytime approach, in that it does not rely on fixed tail-window size; it can thus be used in combination with early stopping, and the value of $r$ can be tuned adaptively. \Cref{fig:toy-iterate-averaging} shows that geometric averaging outperforms both arithmetic averaging and returning the last iterate, $\alpha_T$. Here and throughout, we set $r = 100/T$.

\subsection{Connections to the Literature}\label{sec:related-work}

The dual formulation for the kernel ridge regression objective was first pointed out in the kernel literature in \textcite{saunders1998ridge}. It features in textbooks on the topic \cite[see equations (10.95) and (10.98) in][]{smola1998learning}, albeit with no mention of its better conditioning. 
Gradient descent on the dual objective is equivalent to applying the \emph{kernel adatron} update rule of \textcite{frie1998kernel}, stated there for the hinge loss, and to the \emph{kernel least-mean-square} algorithm of \textcite{liu2008kernel}, analysed theoretically in \textcite{dieuleveut2017harder}. \textcite{shalev2013stochastic} introduce \emph{stochastic dual coordinate ascent} (SDCA), which uses the dual objective with random coordinates and analytic line search, and provide convergence results. \textcite{tu2016large} implement a block (minibatched) version of SDCA. \textcite{bo2008greedy} propose a method similar to SDCA, but with coordinates chosen by active learning. \textcite{wu2023large} reimplement the latter two for Gaussian process regression, and link the algorithms to the method of alternating projections.

From the perspective of stochastic gradient descent, the work closest to ours is that of \textcite{lin2023sampling}; which we refer to as SGD, to contrast with our SDD\@. SGD uses the gradient estimate 
\begin{equation}\label{eq:sgd-loss}
    \nabla L(\alpha) \approx
        n K_i( K_i\tran \alpha - b_i) + \lambda \smash{\sum_{j=1}^m} z_j z_j\tran \alpha\,,
     \vphantom{\int_{X_0}}
\end{equation}
where $i \sim \unif{\{1, \dotsc, n\}}$ is a random index and $\sum_{j=1}^m z_j z_j\tran$ is a random Fourier feature approximation of $K$. This can be seen as targeting the primal loss, with a mixed multiplicative-additive objective. Like SDD, SGD uses geometric iterate averaging and Nesterov's momentum; unlike SDD, the SGD algorithm requires gradient clipping to control the gradient noise. 
Note that \textcite{lin2023sampling} also consider a modified notion of convergence which involves subspaces of the linear system coefficients. 
This is of interest in situations where the number of iterations is sufficiently small relative to data size, and generalization properties are important. 
While our current work does not explicitly cover this setting, it could be studied in a similar manner.

Stochastic gradient descent approaches were also used in similar contexts by, amongst others, \textcite{dai2014scalable} for kernel regression with the primal objective and random Fourier features; \textcite{kivinen2004online} for online learning with the primal objective with random sampling of training data; and \textcite{antoran2023sampling} for large-scale Bayesian linear models with the dual objective.

\section{Experiments and Benchmarks}\label{sec:experiments}

We present three experiments that confirm the strength of our SDD algorithm on standard benchmarks. The first two experiments, on UCI regression and large-scale Bayesian optimisation, replicate those of \textcite{lin2023sampling}, and compare against SGD \cite{lin2023sampling}, CG \cite{wang2019exact} and SVGP \cite{Hensman2013big}. Unless indicated otherwise, we use the code, setup and hyperparameters of \textcite{lin2023sampling}. Our third experiment tests stochastic dual descent on five molecule-protein binding affinity prediction benchmarks of \textcite{Ortegon2022dockstring}. We include detailed descriptions of all three experimental set-ups and some additional results in \cref{app:experiments}. 

\subsection{UCI Regression Baselines}

We benchmark on the 9 UCI repository regression data sets \cite{dua2019UCI} used by \textcite{wang2019exact,lin2023sampling}. 
We run SDD for $100$k iterations, the same number used by \textcite{lin2023sampling} for SGD, but with step-sizes $100\times$ larger than \textcite{lin2023sampling}, except for \textsc{elevators}, \textsc{keggdirected}, and \textsc{buzz}, where this causes divergence, and we use $10\times$ larger step-sizes instead. We run CG to a tolerance of $0.01$, except for the 4 largest data sets, where we stop CG after 100 iterations---this still provides CG with a larger compute budget than first-order methods. 
CG uses a pivoted Cholesky preconditioner of rank $100$.
For SVGP, we use $3,000$ inducing points for the smaller five data sets and $9,000$ for the larger four, so as to match the runtime of the other methods. 

The results, reported in \Cref{tab:UCI_regression}, show that SDD matches or outperforms all baselines on all UCI data sets in terms of root-mean-square error of the mean prediction across test data. SDD strictly outperforms SGD on all data sets and metrics, matches CG on the five smaller data sets, where the latter reaches tolerance, and outperforms CG on the four larger data sets. The same holds for the negative log-likelihood metric (NLL), computed using $64$ posterior function samples, except on \textsc{bike}, where CG marginally outperforms SDD\@. 
Since SDD requires only one matrix-vector multiplication per step, as opposed to two for SGD, it provides about $30\%$ wall-clock time speed-up relative to SGD\@.
We run SDD for 100k iterations to match the SGD baseline, SDD often converges earlier than that.

\begin{table}[tbp]
    \caption{Root-mean-square error (RMSE), compute time (on an A100 GPU), and negative log-likelihood (NLL), for 9 UCI regression tasks for all methods considered. We report mean values and standard error across five $90\%$-train $10\%$-test splits for all data sets, except the largest, where three splits are used. Targets are normalised to zero mean and unit variance. This work denoted by SDD\textsuperscript{\ensuremath{*}}.}
    \label{tab:UCI_regression}
    \vspace{8pt}
    \centering
    \scriptsize
    \setlength{\tabcolsep}{2.5pt}
    \renewcommand{\arraystretch}{1.1}
    \begin{tabular}{l l c c c c c c c c c}
\toprule
\multicolumn{2}{c}{Data} & \textsc{pol} & \textsc{elevators} & \textsc{bike} & \textsc{protein} & \textsc{keggdir} & \textsc{3droad} & \textsc{song} & \textsc{buzz} & \textsc{houseelec} \\
\multicolumn{2}{c}{Size} & 15k & 17k & 17k & 46k & 49k & 435k & 515k & 583k & 2M \\
\midrule
\multirow{4}{*}{\rotatebox[origin=c]{90}{RMSE}}
 & SDD\textsuperscript{\ensuremath{*}}
 & \textbf{0.08\,$\pm$\,0.00} & \textbf{0.35\,$\pm$\,0.00} & \textbf{0.04\,$\pm$\,0.00} & \textbf{0.50\,$\pm$\,0.01} & \textbf{0.08\,$\pm$\,0.00} & \textbf{0.04\,$\pm$\,0.00} & \textbf{0.75\,$\pm$\,0.00} & \textbf{0.28\,$\pm$\,0.00} & \textbf{0.04\,$\pm$\,0.00} \\
 & SGD
 & 0.13\,$\pm$\,0.00 & 0.38\,$\pm$\,0.00 & 0.11\,$\pm$\,0.00 & 0.51\,$\pm$\,0.00 & 0.12\,$\pm$\,0.00 & 0.11\,$\pm$\,0.00 & 0.80\,$\pm$\,0.00 & 0.42\,$\pm$\,0.01 & 0.09\,$\pm$\,0.00 \\
 & CG
 & \textbf{0.08\,$\pm$\,0.00} & \textbf{0.35\,$\pm$\,0.00} & \textbf{0.04\,$\pm$\,0.00} & \textbf{0.50\,$\pm$\,0.00} & \textbf{0.08\,$\pm$\,0.00} & 0.18\,$\pm$\,0.02 & 0.87\,$\pm$\,0.05 & 1.88\,$\pm$\,0.19 & 0.87\,$\pm$\,0.14 \\
 & SVGP
 & 0.10\,$\pm$\,0.00 & 0.37\,$\pm$\,0.00 & 0.08\,$\pm$\,0.00 & 0.57\,$\pm$\,0.00 & 0.10\,$\pm$\,0.00 & 0.47\,$\pm$\,0.01 & 0.80\,$\pm$\,0.00 & 0.32\,$\pm$\,0.00 & 0.12\,$\pm$\,0.00 \\
\midrule
\multirow{4}{*}{\rotatebox[origin=c]{90}{Time (min)}}
 & SDD\textsuperscript{\ensuremath{*}}
 & 1.88\,$\pm$\,0.01 & 1.13\,$\pm$\,0.02 & 1.15\,$\pm$\,0.02 & 1.36\,$\pm$\,0.01 & 1.70\,$\pm$\,0.00 & \textbf{3.32\,$\pm$\,0.01} & \textbf{185\,$\pm$\,0.56} & \textbf{207\,$\pm$\,0.10} & \textbf{47.8\,$\pm$\,0.02} \\
 & SGD
 & 2.80\,$\pm$\,0.01 & 2.07\,$\pm$\,0.03 & 2.12\,$\pm$\,0.04 & 2.87\,$\pm$\,0.01 & 3.30\,$\pm$\,0.12 & 6.68\,$\pm$\,0.02 & 190\,$\pm$\,0.61 & 212\,$\pm$\,0.15 & 69.5\,$\pm$\,0.06 \\
 & CG
 & \textbf{0.17\,$\pm$\,0.00} & \textbf{0.04\,$\pm$\,0.00} & \textbf{0.11\,$\pm$\,0.01} & \textbf{0.16\,$\pm$\,0.01} & \textbf{0.17\,$\pm$\,0.00} & 13.4\,$\pm$\,0.01 & 192\,$\pm$\,0.77 & 244\,$\pm$\,0.04 & 157\,$\pm$\,0.01 \\
 & SVGP
 & 11.5\,$\pm$\,0.01 & 11.3\,$\pm$\,0.06 & 11.1\,$\pm$\,0.02 & 11.1\,$\pm$\,0.02 & 11.5\,$\pm$\,0.04 & 152\,$\pm$\,0.15 & 213\,$\pm$\,0.13 & 209\,$\pm$\,0.37 & 154\,$\pm$\,0.12 \\
\midrule
\multirow{4}{*}{\rotatebox[origin=c]{90}{NLL}}
 & SDD\textsuperscript{\ensuremath{*}}
 & \textbf{-1.18\,$\pm$\,0.01} & \textbf{0.38\,$\pm$\,0.01} & -2.49\,$\pm$\,0.09 & \textbf{0.63\,$\pm$\,0.02} & \textbf{-0.92\,$\pm$\,0.11} & \textbf{-1.70\,$\pm$\,0.01} & \textbf{1.13\,$\pm$\,0.01} & \textbf{0.17\,$\pm$\,0.06} & \textbf{-1.46\,$\pm$\,0.10} \\
 & SGD
 & -0.70\,$\pm$\,0.02 & 0.47\,$\pm$\,0.00 & -0.48\,$\pm$\,0.08 & 0.64\,$\pm$\,0.01 & -0.62\,$\pm$\,0.07 & -0.60\,$\pm$\,0.00 & 1.21\,$\pm$\,0.00 & 0.83\,$\pm$\,0.07 & -1.09\,$\pm$\,0.04 \\
 & CG
 & \textbf{-1.17\,$\pm$\,0.01} & \textbf{0.38\,$\pm$\,0.00} & \textbf{-2.62\,$\pm$\,0.06} & \textbf{0.62\,$\pm$\,0.01} & \textbf{-0.92\,$\pm$\,0.10} & 16.3\,$\pm$\,0.45 & 1.36\,$\pm$\,0.07 & 2.38\,$\pm$\,0.08 & 2.07\,$\pm$\,0.58 \\
 & SVGP
 & -0.67\,$\pm$\,0.01 & 0.43\,$\pm$\,0.00 & -1.21\,$\pm$\,0.01 & 0.85\,$\pm$\,0.01 & -0.54\,$\pm$\,0.02 & 0.60\,$\pm$\,0.00 & 1.21\,$\pm$\,0.00 & 0.22\,$\pm$\,0.03 & -0.61\,$\pm$\,0.01 \\
\bottomrule
\end{tabular}

\end{table}

\begin{figure}[tbp]
    \centering
    \includegraphics[width=5.4in]{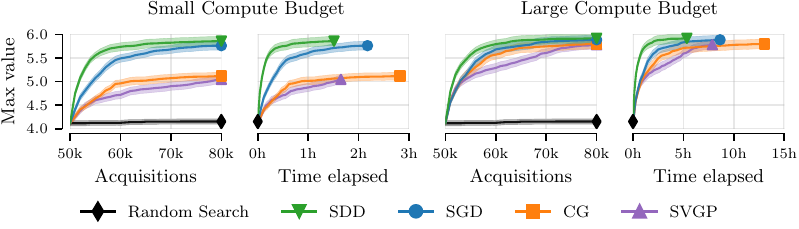}
    \caption{Results for the Thompson sampling task. Plots show mean and standard error of the maximum function values identified, across $5$ length scales and $10$ seeds, against both the number of observations acquired and the corresponding compute time on an A100 GPU\@.
    The compute time includes drawing posterior function samples and finding their maxima. All methods share an initial data set of 50k points, and take 30 steps of parallel Thompson sampling, acquiring $1$k points at each.}
    \label{fig:bayesopt-task}
\end{figure}

\begin{table}[tbp]
    \caption{Test set $R^2$ scores obtained for each target protein on the \textsc{dockstring} molecular binding affinity prediction task. Results with (\ensuremath{\cdot})\textsuperscript{\ensuremath{\dagger}} are from \textcite{Ortegon2022dockstring}, those with (\ensuremath{\cdot})\textsuperscript{\ensuremath{\ddagger}} are from \textcite{tripp2023tanimoto}. SVGP uses 1000 inducing points. SDD\textsuperscript{\ensuremath{*}} denotes this work.}
    \label{tab:molecule_regression}
    \vspace{8pt}
    \centering
    \scriptsize
    \setlength{\tabcolsep}{2.5pt}
    \renewcommand{\arraystretch}{1.1}
    \hfill
\begin{tabular}{l c c c c c}
\toprule
Method & \textsc{ESR2} & \textsc{F2} & \textsc{KIT} & \textsc{PARP1} & \textsc{PGR} \\
\midrule
Attentive FP\textsuperscript{\ensuremath{\dagger}}      & \textbf{0.627} & \textbf{0.880} & \textbf{0.806} & \textbf{0.910} & \textbf{0.678} \\
MPNN\textsuperscript{\ensuremath{\dagger}}              & 0.506 & 0.798 & 0.755 & 0.815 & 0.324 \\
XGBoost\textsuperscript{\ensuremath{\dagger}}             & 0.497 & 0.688 & 0.674 & 0.723 & 0.345 \\
\bottomrule
\end{tabular}
\hfill
\begin{tabular}{l c c c c c}
\toprule
Method & \textsc{ESR2} & \textsc{F2} & \textsc{KIT} & \textsc{PARP1} & \textsc{PGR} \\
\midrule
SDD\textsuperscript{\ensuremath{*}}             & \textbf{0.627} & \textbf{0.880} & 0.790 & 0.907 & 0.626 \\
SGD               & 0.526 & 0.832 & 0.697 & 0.857 & 0.408 \\
SVGP\textsuperscript{\ensuremath{\ddagger}}     & 0.533 & 0.839 & 0.696 & 0.872 & 0.477 \\ 
\bottomrule
\end{tabular}
\hfill

\end{table}

\subsection{Large-scale Thompson Sampling}

Next, we replicate the synthetic large-scale black-box function optimisation experiments of \textcite{lin2023sampling}, which  consist of finding the maxima of functions mapping $[0,1]^8 \to \R$ sampled from Matérn-$3/2$ Gaussian process priors with 5 different length-scales and 10 random functions per length-scale, using parallel Thompson sampling \cite{hernandezlobato2017Parallel}.
We set the kernel used by each model to match that of the unknown function. All methods are warm-started with the same 50k points chosen uniformly at random on the domain. 
We then run 30 iterations of the parallel Thompson sampling, acquiring 1000 points at each iteration. We include two variants of the experiment, one with a small compute budget, where SGD and SDD are run for 15k steps, SVGP is given 20k steps and CG is run for 10 steps, and one with a large budget, where all methods are run for 5 times as many steps. We present the results on this task in \cref{fig:bayesopt-task}, averaged over both length-scales and seeds, and a detailed breakdown in \cref{app:experiments}. In both large and small compute settings, SDD makes the most progress, in terms of maximum value found, while using the least compute. The performance of SDD and SGD degrades gracefully when compute budget is limited.

\subsection{Molecule-protein Binding Affinity Prediction for Drug Discovery}\label{expt:molecules}

In this final experiment, we show that Gaussian processes with SDD are competitive with graph neural networks in predicting binding affinity, a widely used filter in drug discovery \cite{pinza2019docking,yang2021efficient}. We use the \textsc{dockstring} regression benchmark of \textcite{Ortegon2022dockstring}, which contains five tasks, corresponding to five different proteins. 
The inputs are the graph structures of $250$k candidate molecules, and the targets are real-valued affinity scores from the docking simulator \emph{AutoDock Vina}~\cite{trott2010autodock}. 
For each protein, we use a standard train-test splits of $210$k and $40$k molecules, respectively.
We use Morgan fingerprints of dimension $1024$~\cite{rogers2010extended} to represent the molecules, and use a Gaussian process model based on the Tanimoto kernel of \textcite{ralaivola2005graph} with the hyperparameters of \textcite{tripp2023tanimoto}.

In \cref{tab:molecule_regression}, following \textcite{Ortegon2022dockstring}, we report $R^2$ values. Alongside results for SDD and SGD, we include results from \textcite{Ortegon2022dockstring} for XGBoost, and for two graph neural networks, MPNN \cite{gilmer2017neural} and Attentive FP \cite{xiong2019pushing}, the latter of which is the state-of-the-art for this task. 
We also include the results for SVGP reported by \textcite{tripp2023tanimoto}. 
These results show that SDD matches the performance of Attentive FP on the ESR2 and FP2 proteins, and comes close on the others. 
To the best of our knowledge, this is the first time Gaussian processes have been shown to be competitive on a large-scale molecular prediction task.

\section{Conclusion}

We introduced stochastic dual descent, a specialised first-order stochastic optimisation algorithm for computing Gaussian process mean predictions and posterior samples. We showed that stochastic dual descent performs very competitively on standard regression benchmarks and on large-scale Bayesian optimisation, and matches the performance of state-of-the-art graph neural networks on a molecular binding affinity prediction task.

\section*{Acknowledgments}

JAL and SP were supported by the University of Cambridge Harding Distinguished Postgraduate Scholars Programme. AT was supported by Cornell University, jointly via the Center for Data Science for Enterprise and Society, the College of Engineering, and the Ann S. Bowers College of Computing and Information Science. JMHL acknowledges support from a Turing AI
Fellowship under grant EP/V023756/1. CS gratefully acknowledges funding  from the Canada CIFAR AI Chairs Program, Amii and NSERC.

\printbibliography

\clearpage

\appendix

\section{Convex Duality and Uniform Approximation Bounds}\label{appendix:duality}\label{appendix:pointwise-error}

\begin{claim}[Strong duality]
    We have that
    \begin{equation*}
        \min_{\alpha \in \R^n} L(\alpha) = -\lambda \min_{\alpha \in \R^n} L^*(\alpha),
    \end{equation*}
    for $L, L^*$ defined per \cref{eq:primal_gradient,eq:dual_gradient} respectively, with $\alpha^\star(b)$ minimising both sides.
\end{claim}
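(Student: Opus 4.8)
The plan is to prove the identity by directly evaluating both objectives at the common candidate minimiser $\alpha^\star(b) = (K+\lambda I)^{-1}b$ and comparing the resulting scalar values; since $L$ and $L^*$ are both convex quadratics, it suffices to check that $\alpha^\star(b)$ is a stationary point of each and then substitute.

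First I would confirm that $\alpha^\star(b)$ minimises both objectives. For the dual, the Hessian $\nabla^2 L^*(\alpha) = K + \lambda I$ is strictly positive definite because $\lambda > 0$, so $L^*$ is strictly convex and its unique minimiser is the unique root of $\nabla L^*(\alpha) = (K+\lambda I)\alpha - b$, which is exactly $\alpha^\star(b)$. For the primal, the Hessian $\nabla^2 L(\alpha) = K(K+\lambda I)$ is positive semidefinite, so $L$ is convex and every stationary point is a global minimiser; substituting $\alpha^\star(b)$ into $\nabla L(\alpha) = K((K+\lambda I)\alpha - b)$ gives $K \cdot 0 = 0$, so $\alpha^\star(b)$ minimises $L$ as well, though, since $K$ may be singular, not necessarily uniquely. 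This non-uniqueness does not affect the value of the minimum, which is all the claim concerns.

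The computation then hinges on the defining identity $(K+\lambda I)\alpha^\star(b) = b$, from which I extract two facts: the residual satisfies $b - K\alpha^\star(b) = \lambda\,\alpha^\star(b)$, and the quadratic form evaluates to $\alpha^\star(b)\tran(K+\lambda I)\alpha^\star(b) = \alpha^\star(b)\tran b = b\tran(K+\lambda I)^{-1}b$. Using the first, the primal residual term becomes $\norm{b - K\alpha^\star(b)}^2 = \lambda^2\norm{\alpha^\star(b)}^2$, and collecting terms gives
\[
L(\alpha^\star(b)) = \frac{\lambda}{2}\,\alpha^\star(b)\tran(K+\lambda I)\alpha^\star(b) = \frac{\lambda}{2}\,b\tran(K+\lambda I)^{-1}b.
\]
For the dual, substituting $(K+\lambda I)\alpha^\star(b) = b$ directly yields $L^*(\alpha^\star(b)) = \tfrac12\alpha^\star(b)\tran b - \alpha^\star(b)\tran b = -\tfrac12\, b\tran(K+\lambda I)^{-1}b$. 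Multiplying the latter by $-\lambda$ recovers the former, which is precisely the claimed equality $\min_\alpha L(\alpha) = -\lambda \min_\alpha L^*(\alpha)$.

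There is no real obstacle here: the argument is a short, self-contained computation, and the only point requiring a little care is that $K$ need not be invertible, so I phrase the primal minimisation through convexity and stationarity rather than uniqueness. A more abstract route, deriving the dual via Fenchel conjugation or a Lagrangian, would reproduce the same pair of values and would justify the terminology \emph{strong dual pair}, but it is heavier than necessary for establishing the stated scalar identity, so I would relegate any such remark to a closing comment.
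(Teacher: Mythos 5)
Your proof is correct, but it takes a genuinely different route from the paper's. You verify the identity directly: you check via first-order conditions and convexity that $\alpha^\star(b)$ minimises both objectives (correctly noting that $K$ may be singular, so the primal minimiser need not be unique, only the minimum value), and then evaluate both minima explicitly, obtaining $\tfrac{\lambda}{2}\, b\tran (K+\lambda I)^{-1} b$ for the primal and $-\tfrac{1}{2}\, b\tran (K+\lambda I)^{-1} b$ for the dual, which gives the claimed equality upon multiplying by $-\lambda$. The paper instead derives the identity structurally: it rewrites $\min_\alpha L(\alpha)$ as a constrained quadratic problem with the constraint $u = K\alpha - b$, introduces Lagrange multipliers in the form $\lambda\beta$, invokes strong duality for finite-dimensional quadratic problems to exchange the minima and the supremum, and solves the two inner minimisations analytically to arrive at $-\lambda\min_\beta L^*(\beta)$. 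Both arguments are complete; yours is shorter, more elementary, and slightly more careful about the degenerate case of singular $K$, which the paper passes over by simply citing first-order optimality. What the paper's Lagrangian derivation buys is the justification of the terminology: it exhibits $L^*$ as the Lagrange dual of $L$ (up to the $-\lambda$ rescaling), which is precisely the ``sense made precise'' that the main text promises when it calls $L$ and $L^*$ a strong dual pair. Your computation confirms the equality of optimal values but leaves that duality interpretation as an unproven closing remark, so if your proof replaced the paper's, the main text's forward reference would lose its target.
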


\begin{proof}
    That $\alpha^\star(b)$ minimises both $L$ and $L^\star$ can be established from the first order optimality conditions. Now, for the duality, observe that we can write $\min_{\alpha \in \R^n} L(\alpha)$ equivalently as the constrained optimisation problem
    \begin{equation*}
        \min_{u \in \R^n} \min_{\alpha \in \R^n} \frac{1}{2} \|u\|^2 + \frac{\lambda}{2}\|\alpha\|^2_K \spaced{subject to} u=K\alpha-b\,.
    \end{equation*}
    Note that this is quadratic in both $u$ and $\alpha$. Introducing Lagrange multipliers $\beta \in \R^n$, in the form $\lambda \beta$, where we recall that $\lambda > 0$, the solution of the above is equal to that of
    \begin{equation*}
         \min_{u \in \R^n} \min_{\alpha \in \R^n} \sup_{\beta \in \R^n} \frac{1}{2} \|u\|^2 + \frac{\lambda}{2}\|\alpha\|^2_K + \lambda \beta\tran(b-K\alpha - u)\,.
    \end{equation*}
    This is a finite-dimensional quadratic problem, and thus we have strong duality \cite[see, e.g., Examples 5.2.4 in][]{boyd2004convex}. We can therefore exchange the order of the minimum operators and the supremum, yielding the again equivalent problem
    \begin{equation*}
        \sup_{\beta \in \R^n} \left\{\min_{u \in \R^n} \frac{1}{2} \|u\|^2 - \lambda \beta\tran u \right\} + \left\{\min_{\alpha \in \R^n} \frac{\lambda}{2}\|\alpha\|^2_K - \lambda\beta\tran K\alpha\right\}\, + \lambda \beta\tran b.
    \end{equation*}
    Noting that the two inner minimisation problems are quadratic, we solve these analytically using the first order optimality conditions, that is $\alpha = \beta$ and $u = \lambda  \beta$, to obtain that the above is equivalent to
    \begin{equation*}
        \sup_{\beta \in \R^n} - \lambda\left(\frac{1}{2}\|\beta\|^2_{K+\lambda I} - \beta\tran b\right) = -\lambda \min_{\beta \in \R^n} L^*(\beta)\,.
    \end{equation*}
    The result follows by chaining the above equalities and relabelling $\beta \mapsto \alpha$.
\end{proof}

To show \cref{eq:pointwise-error}, the uniform approximation bound, we first need to define reproducing kernel Hilbert spaces (RKHSes). Let $\cH$ be a Hilbert space of functions $\cX \to \R$ with inner product $\langle \cdot, \cdot \rangle$ and corresponding norm $\|\cdot\|_\cH$. We say $\cH$ is the RKHS associated with a bounded kernel $k$ if the reproducing property holds:
\begin{equation*}
    \forall x \in \cX,\ \forall h \in \cH, \qquad \langle k(x, \cdot) , h \rangle = h(x)\,.
\end{equation*}
That is, $k(x, \cdot)$ is the evaluation functional (at $x \in \cX$) on $\cH$. For observations $X$, let $\Phi \colon \cH \to \R^n$ be the linear operator mapping $h \mapsto h(X)$, where $h(X) = (h(x_1), \dotsc, h(x_n))$. We will write $\Phi^*$ for the adjoint of $\Phi$, and observe that $K$ is the matrix of the operator $\Phi\Phi^*$ with respect to a standard basis on $\R^n$ (that used implicitly throughout).

\begin{claim}[Uniform approximation]
    For any $\alpha, \alpha' \in \R^n$, 
    \begin{equation*}
        \|h_{\alpha} - h_{\alpha'}\|_\infty \leq \sqrt{\kappa} \|\alpha - \alpha'\|_K\,,
    \end{equation*}
    where $\kappa = \sup_{x \in \cX} k(x,x)$.
\end{claim}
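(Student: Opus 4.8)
The plan is to reduce the statement to a single-function bound and then invoke the reproducing property together with Cauchy--Schwarz. By linearity of the map $\alpha \mapsto h_\alpha$, setting $\delta = \alpha - \alpha'$ gives $h_\alpha - h_{\alpha'} = h_\delta$, so it suffices to show $\|h_\delta\|_\infty \le \sqrt{\kappa}\,\|\delta\|_K$ for an arbitrary $\delta \in \R^n$.

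First I would identify $h_\delta$ as an element of the RKHS $\cH$. Since $h_\delta = \sum_i \delta_i k(x_i, \cdot)$, the definition of the adjoint $\Phi^*$ together with the reproducing property yields $h_\delta = \Phi^* \delta$: indeed, for any $h \in \cH$ we have $\langle \sum_i \delta_i k(x_i, \cdot), h \rangle = \sum_i \delta_i h(x_i) = \langle \delta, \Phi h \rangle = \langle \Phi^* \delta, h\rangle$. I would then fix $x \in \cX$ and apply the reproducing property once more to express the pointwise value as an inner product, $h_\delta(x) = \langle k(x, \cdot), h_\delta \rangle$, and bound it by Cauchy--Schwarz as $|h_\delta(x)| \le \|k(x,\cdot)\|_\cH \, \|h_\delta\|_\cH$.

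It then remains to control the two factors. For the first, the reproducing property and the definition of $\kappa$ give $\|k(x, \cdot)\|_\cH^2 = \langle k(x,\cdot), k(x, \cdot)\rangle = k(x,x) \le \kappa$. For the second, using $h_\delta = \Phi^* \delta$ and the stated fact that $K$ is the matrix of $\Phi \Phi^*$, we get $\|h_\delta\|_\cH^2 = \langle \Phi^* \delta, \Phi^* \delta\rangle = \langle \delta, \Phi\Phi^* \delta\rangle = \delta\tran K \delta = \|\delta\|_K^2$. Combining the three displays gives $|h_\delta(x)| \le \sqrt{\kappa}\,\|\delta\|_K$, and since the right-hand side does not depend on $x$, taking the supremum over $x \in \cX$ yields $\|h_\delta\|_\infty \le \sqrt{\kappa}\,\|\delta\|_K$, which is the claim.

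The argument is a short chain of routine RKHS manipulations, so I do not expect any serious obstacle; the only step requiring a little care is the identification $\|h_\delta\|_\cH^2 = \|\delta\|_K^2$, which hinges on recognising $K$ as the Gram matrix of the operator $\Phi\Phi^*$ — exactly the fact recorded just before the claim. With that in hand, the remaining bounds follow immediately.
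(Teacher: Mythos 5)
Your proposal is correct and follows essentially the same route as the paper's proof: the reproducing property plus Cauchy--Schwarz to bound the sup norm by $\sqrt{\kappa}$ times the RKHS norm, then the identification $\|h_\delta\|_\cH^2 = \langle \delta, \Phi\Phi^*\delta\rangle = \|\delta\|_K^2$ via the adjoint. The only differences are cosmetic---you reduce to a single vector $\delta = \alpha - \alpha'$ by linearity and spell out the verification that $h_\delta = \Phi^*\delta$, which the paper simply asserts.
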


\begin{proof}
    First, observe that,
    \begin{align*}
    \|h_{\alpha} - h_{\alpha'}\|_\infty 
    &= \sup_{x \in \cX} |h_{\alpha}(x) - h_{\alpha'}(x)| \tag{defn.\ of sup norm}\\
    &= \sup_{x \in \cX} |\langle k(x, \cdot), h_{\alpha} - h_{\alpha'} \rangle| \tag{reproducing property} \\
    &\leq \sup_{x \in \cX} \|k(x, \cdot)\|_\cH \|h_{\alpha} - h_{\alpha'}\|_\cH \tag{Cauchy-Schwarz}\\
    &\leq \sqrt{\kappa} \|h_{\alpha} - h_{\alpha'}\|_\cH\,, \tag{defn.\ of $\kappa$}
\end{align*}
Now, observe that $h_\alpha = \Phi^*\alpha$ and $h_{\alpha'} = \Phi^*\alpha'$, and so we have the equalities
\begin{align*}
    \|h_\alpha - h_{\alpha'}\|^2_\cH
    &= \langle \Phi^*(\alpha - \alpha'), \Phi^* (\alpha - \alpha') \rangle \tag{defn.\ norm}\\
    &= \langle \alpha - \alpha', \Phi\Phi^* (\alpha - \alpha') \rangle \tag{defn.\ adjoint}\\
    &= \| \alpha - \alpha'\|^2_K\,, \tag{defn. $K$}
\end{align*} 
where for the final equality, we note that $K$ is the matrix of the operator $\Phi\Phi^*$ with respect to the standard basis. Combining the above two displays yields the claim.
\end{proof}

\section{Effects of Varying Step-size and Batch-size}\label{appendix:extra-experiments}

In \cref{fig:lr_comparison,fig:batch_size_comparison,fig:batch_size_comparison_time} we examine the trade-offs related to step-size and batch-size for stochastic dual gradient descent with gradient estimators either of the mixed additive-multiplicative type, used by SGD, or the purely multiplicative type, recommended in this work. In short, trade-offs exist in the additive-multiplicative case, and we cannot make clear recommendations. For purely multiplicative noise, the picture is clearer:
\begin{itemize}
    \item Step-size should be chosen as large as possible while avoiding divergence, to improve both the rate of convergence and the asymptotic quality of the result.
    \item Batch-size should be chosen as small as possible while avoiding divergence, since a larger batch incurs a larger per-step computational cost without affecting the convergence of the algorithm.
\end{itemize}
Of course, this then leaves a step-size versus batch-size trade-off for SDD, translating into a trade-off between the rate of convergence and the asymptotic behaviour versus wall-clock time. This trade-off will need to be addressed on a case-by-case basis.

\begin{figure}[tbp]
    \centering
    \includegraphics[width=5.5in]{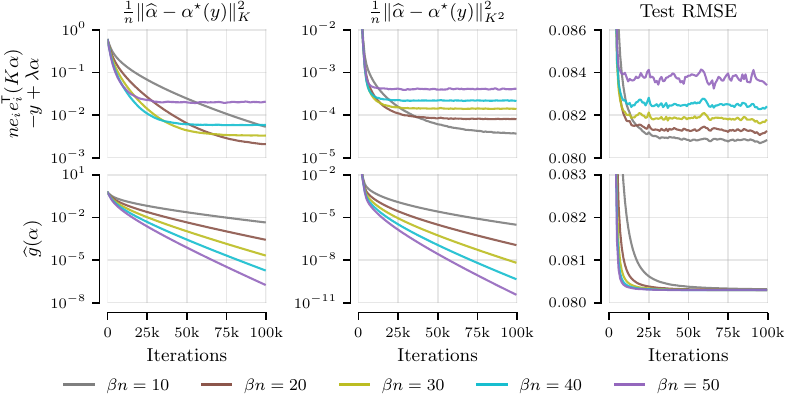}
    \caption{Comparison of dual stochastic gradient descent on the \textsc{pol} with a momentum parameter $\rho=0.9$, averaging parameter $r = 0.001$, batch size $B = 512$, and varying step-sizes $\beta n$. For the mixed additive-multiplicative noise gradient estimator in the top row, higher step-size leads to faster convergence, but worse asymptotic result. For our recommended multiplicative noise estimator in the bottom row, higher step-size improves both the speed of convergence and the asymptotic result.}
    \label{fig:lr_comparison}
\end{figure}

\begin{figure}[tbp]
    \centering
    \includegraphics[width=5.5in]{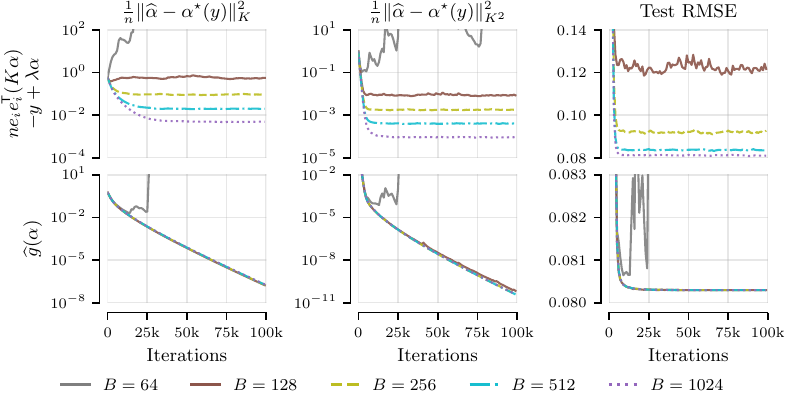}
    \caption{Dual stochastic gradient descent on the \textsc{pol} with a momentum parameter $\rho=0.9$, averaging parameter $r = 0.001$, step-size $\beta n = 50$, and varying batch sizes $B$, with metrics plotted against the number of iterations. For the mixed additive-multiplicative noise gradient estimator in the top row, higher batch size improves the final performance. For our preferred multiplicative noise estimator in the bottom row, batch sizes has little effect on performance, so long as it is not so low that it causes the optimisation to diverge.}
    \label{fig:batch_size_comparison}
\end{figure}

\begin{figure}[tbp]
    \centering
\includegraphics[width=5.5in]{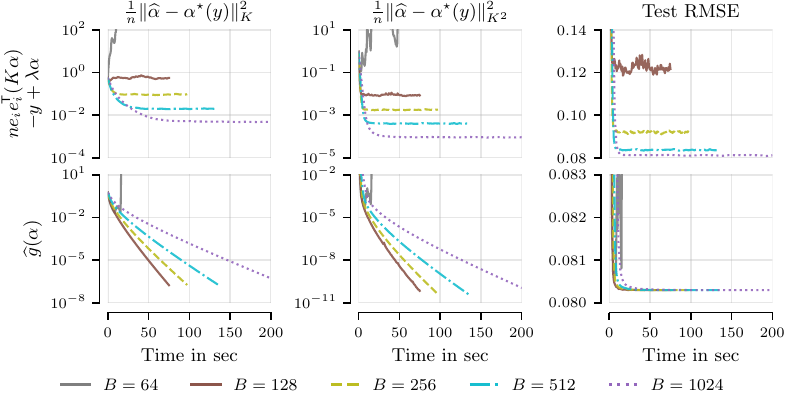}
    \caption{Dual stochastic gradient descent on the \textsc{pol} with a momentum parameter $\rho=0.9$, averaging parameter $r = 0.001$, step-size $\beta n = 50$, and varying batch sizes $B$, with metrics plotted against wall-clock time. For the mixed additive-multiplicative noise gradient estimator, conclusions match those in \cref{fig:batch_size_comparison}. For our preferred multiplicative noise estimator in the bottom row, since performance for batch sizes that lead to convergence is similar, performance is best for the smallest batch-size that does not lead to divergence, as batch-size increases the per-step computation cost.}
    \label{fig:batch_size_comparison_time}
\end{figure}

\section{Additional Details on Experimental Setups and Results} \label{app:experiments}

We use the implementation and hyperparameter configurations from \textcite{lin2023sampling} for the SGD, SVGP and CG baselines,\footnote{\url{https://github.com/cambridge-mlg/sgd-gp}}\ which uses the \texttt{jax} library \cite{jax2018github}. Some more detail on the implementations is as follows.
\begin{description}
    \item[CG] We implement CG using the \texttt{jax.scipy} library, with a pivoted Cholesky preconditioner of size 100 computed using the
    \texttt{TensorFlow Probability} library. 
    \item[SGD] We use minibatches of 512 to estimate the fit term of \cref{eq:sgd-loss}, and 100 random features for the regulariser. For UCI and Bayesian optimisation experiments, random Fourier features are used. For the molecule task, random hashes, as described in \textcite{tripp2023tanimoto}.  
    SGD uses Nesterov's momentum of $0.9$ and geometric iterate averaging, as implemented in \texttt{optax}, with $r=100/T$, where $T$ is the total number of SGD iterations. We clip the 2-norm of the gradient estimates to $0.1$. These settings are replicated from \textcite{lin2023sampling}, which in turn takes these from \textcite{antoran2023sampling}. 
    \item[SVGP]   We initialise inducing points using \texttt{k-means} and \texttt{k-means++}. We then optimise all variational parameters, including inducing point locations, by maximising the ELBO with the Adam optimiser until convergence, using the \texttt{GPJax} library \cite{Pinder2022}.
\end{description}

\subsection{UCI Regression}

For all methods, we use a Matérn-$3/2$ kernel with a fixed set of hyperparameters chosen by \textcite{lin2023sampling} via maximum likelihood.
To ease reproducibility, we make the full hyperparameter set available as a Python file in our source code \href{https://anonymous.4open.science/r/SDD-GPs-5486/scalable_gps/configs/uci_regression_hparams.txt}{\textsc{here}}. SGD uses $\beta n = 0.5$ to estimate the mean function weights, and $\beta n = 0.1$ to draw samples.
For SDD, we use step-sizes $\beta n$ which are $100$ times larger, except for \textsc{elevators}, \textsc{keggdir} and \textsc{buzz}, where this causes divergence; there, we use a $10\times$ larger step-size instead. 
SDD and SGD are both run for $100$k steps with batch size $B=512$ for both the mean function and posterior samples.
For CG, we use a maximum of $1000$ steps for data sets with $N \leq 50\text{k}$, and a tolerance of $0.01$.
On the four largest data sets, the per step cost of CG is too large to run $1000$ steps, and we run $100$ steps instead.
For SVGP, the number of inducing points is chosen such that the compute cost approximately matches that of other methods: $3000$ for the smaller five data sets and $9000$ for the larger four.
Negative log-likelihood computations are done by estimating the predictive variances using 64 posterior samples, with $2000$ random Fourier features used to approximate the requisite prior samples.

\subsection{Large-scale Thompson Sampling}

We draw target functions $\cX \to \R$ from a Gaussian process prior with a Mat\'ern-$3/2$ kernel and length scales $(0.1, 0.2, 0.3, 0.4, 0.5)$, using $2000$ random Fourier features.
For each length scale, we repeat the experiment for 10 seeds. All methods use the same kernel that was used to generate the data.

We optimise the target functions on $\cX = [0, 1]^8$ using parallel Thompson sampling \cite{hernandezlobato2017Parallel}. That is, we choose $x_{\textrm{new}} = \argmax_{x \in \cX} f_n$ for a set of posterior function samples drawn in parallel. 
We replicate the multi-start gradient optimisation maximisation strategy of \textcite{lin2023sampling} (see therein).
For each function sample maximum, we evaluate $y_\textrm{new} = g(x_{\textrm{new}}) + \eps$ with $\eps$ drawn from a zero-mean Gaussian with variance of $10^{-6}$. We then add the pair $(x_{\textrm{new}}, y_{\textrm{new}})$ to the training data.
We use an acquisition batch size of $1000$. We initialise all methods with a data set of $50\text{k}$ observations sampled uniformly at random from $\cX$.

Here, SGD uses a step-size of $\beta n = 0.3$ for the mean and $\beta n=0.0003$ for the samples. SDD uses step-sizes that are $10\times$ larger: $\beta n=3$ for the mean and $\beta n=0.003$ for the samples.

\begin{figure}[tbp]
    \centering
    \includegraphics[width=5.5in]{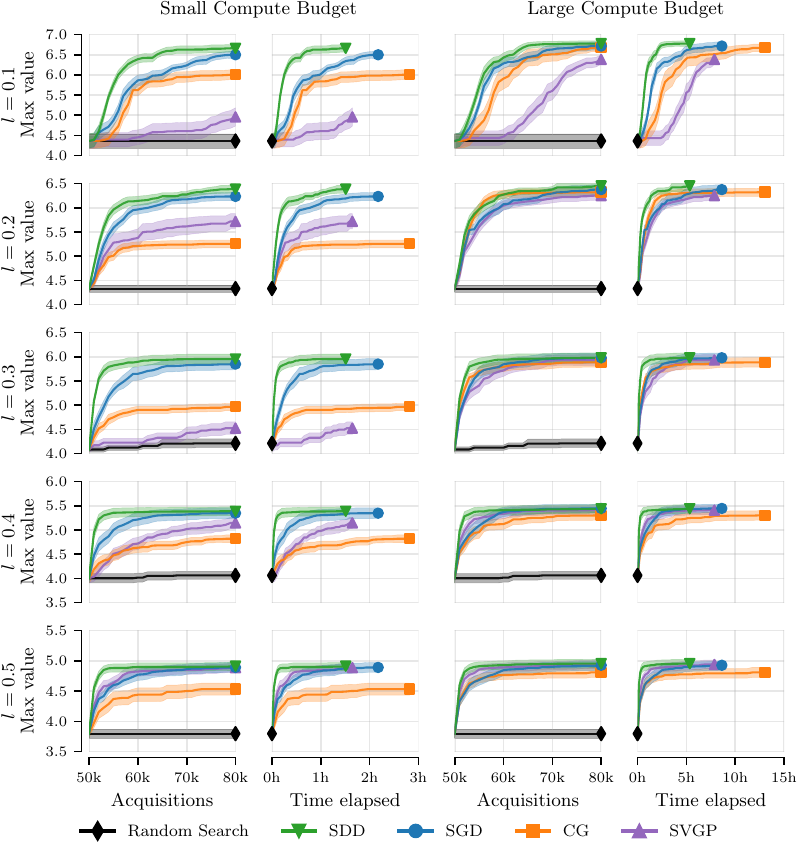}
    \caption{Maximum function values, with mean and standard error across 10 seeds, obtained by parallel Thompson sampling, for functions with different length-scales $l$, plotted as functions of acquisition steps and the compute time on an A100 GPU\@. All methods share an initial data set of 50k points, and take 30 Thompson steps, acquiring a batch of 1000 points in each. The algorithms perform differently across the length-scales: CG performs better in settings with smaller length-scales, which give better conditioning; SVGP tends to perform better in settings with larger length-scales and thus higher smoothness; SGD and SDD perform well in both settings.}
    \label{fig:bayesopt-task-appendix}
\end{figure}

\subsection{Molecule-protein Binding Affinity Prediction}\label{app:molecules}

We use the data set and standard train-test splits from \textcite{Ortegon2022dockstring},
which were produced by structure-based clustering to avoid similar molecules from occurring both in the train and test set.
We perform all the preprocessing steps for this benchmark outlined by \textcite{Ortegon2022dockstring}, 
including limiting the maximum docking score to $5$.

\paragraph{Fingerprints, Tanimoto Kernel and Random Features}
Molecular fingerprints are a way to encode the structure of molecules
by indexing sets of subgraphs present in a molecule.
There are many types of fingerprints.
Morgan fingerprints represent the subgraphs up to a certain radius around each atom in a molecule \cite{rogers2010extended}.
The fingerprint can be interpreted as a sparse vector of counts, analogous to a `bag of words' representation of a document.
Accordingly, the Tanimoto coefficient $T(x,x')$, also called the Jaccard index, is a way to measure similarity between fingerprints, given by
\begin{equation*}
    T(x,x')=\frac{\sum_i\min(x_i,x'_i)}{\sum_i{\max(x_i,x'_i)}}.
\end{equation*}
This function is a valid kernel and has a known random feature expansion using random hashes \cite{tripp2023tanimoto}.
The feature expansion builds upon prior work for fast retrieval of documents using random hashes that approximate the Tanimoto coefficient; that is, a distribution $P_h$ over hash functions $h$ such that
\begin{equation*}
    P_h(h(x)=h(x')) = T(x,x')\,.
\end{equation*}
Per \textcite{tripp2023tanimoto}, we extend such hashes into random \emph{features} by using them to index a random tensor whose entries are independent Rademacher random variables, and use the random hash of \textcite{ioffe2010improved}.

\paragraph{Gaussian Process Setup}
The results for the SVGP baseline are taken from \textcite{tripp2023tanimoto}. 
As the Tanimoto kernel itself has no hyperparameters,
the only kernel hyperparameters are a constant scaling factor $A > 0$ for the kernel and
the noise variance $\lambda$,
and a constant GP prior mean $\mu_0$ (the Gaussian process regresses on $y-\mu_0$ in place of $y$).
These are chosen by using an exact GP to a randomly chosen subset of the data and held constant during the optimisation of the inducing points.
The values of these are given in \cref{tab:molecule_regression_hyperparamters}. The same values are also used for SGD and SDD to ensure that the differences in accuracy are solely due to the quality of the GP posterior approximation.
The SGD method uses 100-dimensional random features for the regulariser.

\begin{table}[htb]
    \caption{Hyperparameters for all Gaussian process methods used in the molecule-protein binding affinity experiments of \cref{expt:molecules}.}\label{tab:molecule_regression_hyperparamters}
    \vspace{8pt}
    \centering
    \setlength{\tabcolsep}{2.5pt}
    \renewcommand{\arraystretch}{1.1}
    \begin{tabular}{l c c c c c}
    \toprule
    Data & \textsc{ESR2} & \textsc{F2} & \textsc{KIT} & \textsc{PARP1} & \textsc{PGR} \\
    \midrule
    $A$   & 0.497 & 0.385 & 0.679 & 0.560 & 0.630 \\
    $\lambda$   & 0.373 & 0.049 & 0.112 & 0.024 & 0.332 \\
    $\mu_0$   & -6.79 & -6.33 & -6.39 & -6.95 & -7.08 \\
    \bottomrule \\
    \end{tabular}
\end{table}

\end{document}